\documentclass[12pt]{article}

\usepackage{amsmath, amssymb, amsthm, bm, graphicx, booktabs, natbib, geometry, microtype}
\usepackage{enumitem}
\usepackage{tabularx}
\usepackage{array}
\usepackage{algorithm}
\usepackage{algorithmic}
\usepackage{placeins}

\numberwithin{equation}{section}

\usepackage[switch]{lineno}

\theoremstyle{plain}
\newtheorem{theorem}{Theorem}[section]

\newtheorem{corollary}[theorem]{Corollary}

\theoremstyle{definition}

\newtheorem{assumption}[theorem]{Assumption}

\theoremstyle{remark}
\newtheorem{remark}[theorem]{Remark}

\usepackage[hidelinks]{hyperref}

\title{Ribbon: Scalable Approximation and Robust Uncertainty Quantification}
\author{Graham Gibson\thanks{\texttt{gcgibson@lanl.gov}}
\and John Tipton
\and Kellin Rumsey
\and Natalie Klein}
\date{}

\begin{document}
\maketitle

\begin{abstract}
Reliably quantifying predictive uncertainty is difficult for complex, high-dimensional, or misspecified models. Both fully Bayesian and bootstrap resampling methods provide principled uncertainty estimates but are often too expensive for modern machine-learning models because they require posterior sampling or repeated model refitting. We introduce \emph{Ribbon}, a scalable approximation to Dirichlet-reweighted bootstrap uncertainty. Ribbon replaces repeated refitting with an influence-function linearization around a single fitted model, preserving the first-order data-reweighting structure of the Bayesian bootstrap while requiring only post-hoc linear algebra. Ribbon approximates the Bayesian-bootstrap or weighted-likelihood-bootstrap refitting target. With a general concentration parameter $\alpha$, Ribbon gives a calibrated Dirichlet-reweighting family whose uncertainty scale can be tuned on validation data. We show that Ribbon is asymptotically equivalent to a flat-prior Laplace approximation under correct likelihood specification and recovers the robust sandwich covariance under misspecification. Across synthetic regression, MNIST classification, and California Housing benchmarks, Ribbon provides competitive predictive performance and improved calibration in several settings while avoiding repeated model retraining.
\end{abstract}

\section{Introduction}

Reliably quantifying predictive uncertainty is a central challenge in modern statistical
and machine--learning models, particularly when those models are high--dimensional,
complex, or misspecified. In many scientific and engineering applications---such as climate
modeling, epidemiology, and autonomous systems---accurate point predictions alone are
insufficient: models must also provide calibrated measures of uncertainty
that reflect both data variability and limitations of the modeling assumptions.

Fully Bayesian inference offers a coherent probabilistic framework through posterior distributions over parameters and predictions \citep{gelman2013bayesian}. However, exact Bayesian computation is rarely tractable for modern models such as deep neural networks. Markov chain Monte Carlo methods can, in principle, be applied, but they scale poorly with model size and often suffer from slow mixing, particularly in high-dimensional or multimodal posterior landscapes \citep{izmailov2021what,papamarkou2022challenges}.

As a result, much recent work has focused on scalable approximate Bayesian methods.
A widely used class of such approaches is based on local Gaussian approximations to the
posterior, including the Laplace approximation \citep{daxberger2021laplace}
and related variational inference methods \citep{blundell2015weight,gal2016dropout}.
While Laplace approximates posterior uncertainty using curvature and variational inference fits an approximating distribution using a divergence objective, both methods yield tractable or easily sampled distributions over parameters.

Despite their scalability, standard Laplace or variational approximations exhibit well--known limitations. Standard curvature-only Laplace methods, while asymptotically correct under ideal likelihood specification, rely on a local quadratic approximation to the loss and use local curvature to quantify posterior concentration. Under model misspecification, curvature-only uncertainty estimates need not match the sampling uncertainty of the fitted estimator, even when point predictions are accurate.
Variational inference methods can underestimate posterior uncertainty due to restrictive variational families and optimization bias \citep{yao2018yes,wilson2020bayesian}. 

Resampling--based procedures provide an appealing alternative. Classical bootstrap methods \citep{efron1979bootstrap} and the Bayesian bootstrap \citep{rubin1981bayesian} approximate uncertainty by resampling or repeatedly reweighting the empirical data distribution and refitting the model. These approaches are model--agnostic in the sense that they perturb the empirical distribution rather than requiring a specific parametric posterior, and often yield useful predictive intervals under mild model misspecification. Their robustness stems from the fact that uncertainty is driven by perturbations of the data rather than local curvature assumptions alone. However, this robustness comes at a substantial computational cost: each resample typically requires retraining or reoptimizing the model, which is prohibitive for modern large--scale models.

This tension between scalable but sometimes brittle curvature--based approximations and robust but expensive resampling methods motivates a unified approach that combines the strengths of both. A key tool in this direction is the influence function, a classical concept from robust statistics that characterizes the infinitesimal effect of perturbing a single observation on an estimator \citep{hampel1974influence,huber2009robust}. Influence functions underlie familiar asymptotic variance estimators, including the sandwich covariance estimator \citep{white1980heteroskedasticity}, and have recently been revisited in machine learning for tasks such as interpretability, data valuation, and robustness analysis \citep{koh2017understanding,pruthi2020estimating}.

Recent work has shown that influence--function linearizations can also be used to approximate uncertainty induced by resampling procedures. In particular, linearizing the effect of weighted empirical risk minimization reveals a close connection between influence functions, the Bayesian bootstrap, and local Gaussian posterior approximations \citep{newton1994approximate,arora2021rue}. Under correct likelihood specification, these linearized resampling distributions coincide asymptotically with the flat-prior Laplace approximation. Under misspecification, however, they naturally generalize to the robust sandwich covariance form, capturing sampling variability ignored by standard curvature-only approximations.

In this work, we build on these insights to introduce Ribbon, a scalable framework for predictive uncertainty quantification within the class of differentiable estimators trained by smooth empirical risk minimization. Ribbon approximates the standard Dirichlet-reweighted Bayesian-bootstrap refitting procedure when $\alpha=1$ by propagating Dirichlet--weighted perturbations of the empirical data distribution through an influence--function linearization around a single fitted model. For general fixed $\alpha>0$, Ribbon should be understood as a frequentist--calibrated Dirichlet-reweighting generalization rather than the canonical Bayesian bootstrap itself.

Crucially, Ribbon partially separates directional uncertainty from global calibration. Directional uncertainty is determined by the influence covariance, involving the curvature and the empirical gradient second moment, while the global scale of uncertainty is governed by a single Dirichlet concentration parameter. This parameter can be tuned using empirical calibration criteria, providing a transparent mechanism for aligning predictive uncertainty with observed frequencies. Such tuning can correct global under- or over-dispersion, although it cannot repair arbitrary directional errors caused by poor curvature or gradient-covariance approximations.

We show that Ribbon recovers the flat-prior Laplace approximation asymptotically under correct likelihood specification when the standard Bayesian-bootstrap concentration $\alpha=1$ is used, while automatically expanding to the robust sandwich form under misspecification. For general fixed concentration values, Ribbon yields a temperature-scaled version of this covariance. Empirically, across regression and classification benchmarks---including neural networks with damped and structured curvature approximations---Ribbon produces competitive and often better calibrated uncertainty estimates than common scalable alternatives.

Our contributions are threefold. First, we formalize the Ribbon framework and establish its theoretical connection to the Bayesian bootstrap, influence functions, and Laplace approximations under both correct specification and misspecification. Second, we provide asymptotic guarantees demonstrating first--order covariance accuracy and validation-based calibration consistency under explicit regularity, monotonicity, and uniform convergence assumptions. Third, we demonstrate empirically that Ribbon delivers robust, well--calibrated uncertainty estimates for several machine--learning models without the computational burden of repeated retraining.

The remainder of the paper proceeds as follows.
Section~\ref{sec:methods} introduces the Ribbon construction and its connection to Dirichlet-reweighted bootstrap uncertainty.
Section~\ref{sec:properties} discusses theoretical properties of the Ribbon estimator, including its relationship to Laplace and sandwich covariance approximations.
Section~\ref{sec:baselines} describes the baseline uncertainty-quantification methods and implementation details.
Section~\ref{sec:experiments} evaluates Ribbon empirically on regression and classification benchmarks.
Section~\ref{sec:discussion} concludes with limitations and future directions.
Appendix~\ref{sec:theory} provides formal asymptotic results and proofs.

\section{Methods}
\label{sec:methods}

\begin{table}[t]
\centering
\caption{Summary of notation used throughout the paper.}
\label{tab:notation}
\renewcommand{\arraystretch}{1.2}
\begin{tabularx}{\textwidth}{>{\raggedright\arraybackslash}p{5.25cm}
                                  >{\raggedright\arraybackslash}X
                                  >{\raggedright\arraybackslash}p{4cm}}
\toprule
\textbf{Symbol} & \textbf{Definition} & \textbf{Role / Interpretation} \\
\midrule
$z_i = (x_i, y_i)$ & Training observation & Data \\
$\ell(z_i,\theta)$ & Per-sample loss function & Objective \\
$L_n(\theta) = n^{-1}\sum_{i=1}^n \ell(z_i,\theta)$ & Empirical risk & Objective \\
$r(\theta)$ & Optional fixed regularization penalty on the same scale as $L_n$ & Penalization \\
$\hat\theta$ & Unweighted ERM or penalized ERM & Point estimate \\
$w = (w_1,\dots,w_n)$ & Dirichlet weights, $w \sim \mathrm{Dirichlet}(\alpha, \ldots, \alpha)$ & Resampling \\
$\tilde w = n w - j$ & Centered count-scale weight perturbation & Resampling \\
$\hat\theta_w$ & Weighted ERM minimizer & Target parameter \\
$\tilde\theta_w$ & Influence-function approximation to $\hat\theta_w$ & Approximation \\
$g_i = \nabla_\theta \ell(z_i,\hat\theta)$ & Per-sample empirical-loss gradient at $\hat\theta$ & Sensitivity \\
$G = [g_1,\dots,g_n]^\top$ & Stacked gradient matrix & Sensitivity \\
$H_i = \nabla^2_\theta \ell(z_i,\hat\theta)$ & Per-sample Hessian & Curvature \\
$H = n^{-1}\sum_{i=1}^n H_i$ & Mean empirical-loss Hessian at $\hat\theta$ & Curvature \\
$H_r = H + \nabla^2 r(\hat\theta)$ & Penalized curvature, when a fixed penalty is used & Curvature \\
$H_F = n^{-1}\sum_{i=1}^n g_i g_i^\top$ & Empirical gradient second moment; empirical Fisher-type quantity for likelihood losses & Score variance \\
$H_{F,c}=n^{-1}G^\top(I-n^{-1}jj^\top)G$ & Centered gradient second moment & Score variance \\
$U_n(w) = \sum_{i=1}^n (w_i-1/n) g_i$ & Centered weighted score perturbation & Random direction \\
$\Delta_{\mathrm{IF}} = -H^{-1} U_n(w)$ & Influence-function parameter update, unregularized exact-ERM case & Approximation \\
$f(x;\theta)$ & Predictive model & Prediction \\
$J_x = \partial f(x;\hat\theta)/\partial\theta^\top$ & Prediction Jacobian & Pushforward \\
$\alpha$ & Dirichlet concentration parameter & Uncertainty scaling \\
\bottomrule
\end{tabularx}
\end{table}

\subsection{Overview}

The central idea of Ribbon is to approximate Dirichlet-reweighted bootstrap uncertainty by propagating perturbations of the empirical measure through an influence--function linearization at a single fitted estimator. Let $\mathcal{D}_n = \{z_i = (x_i, y_i)\}_{i=1}^n$ denote the training data, and let $w = (w_1, \ldots, w_n)$ be a vector of random weights drawn from a symmetric $\mathrm{Dirichlet}(\alpha, \dots, \alpha)$ distribution.
The standard Bayesian bootstrap corresponds to $\alpha=1$. Values $\alpha\ne1$ produce a temperature-scaled Dirichlet reweighting family used for frequentist calibration.

We consider the case where we estimate a predictive model $f(x; \theta)$ using loss function $\ell$.
The weighted ERM is defined as
\begin{equation}
  \hat{\theta}_w
  = \arg\min_{\theta} \sum_{i=1}^n w_i\, \ell(z_i, \theta),
  \label{eq:wlb}
\end{equation}
where $\hat{\theta}_w$ denotes the fitted parameters under weighting scheme $w$.
For $\alpha=1$, Eq.~\eqref{eq:wlb} is the Dirichlet-reweighted refitting target associated with the Bayesian bootstrap or weighted likelihood bootstrap. Ribbon approximates this weighted empirical risk minimizer without repeated refitting. The approximation uses a first--order influence update around the unweighted ERM $\hat\theta$, the minimizer of the unweighted empirical loss obtained when $w_i = 1/n$ for all $i$.

Define the per--example gradient and Hessian at the unweighted ERM $\hat{\theta}$ as
\[
  g_i = \nabla_\theta \ell(z_i, \hat{\theta}),
  \qquad
  H_i = \nabla^2_\theta \ell(z_i, \hat{\theta}),
\]
and let $H = n^{-1} \sum_{i=1}^n H_i$ denote the average curvature matrix,
$G = [g_1, \ldots, g_n]^{\!\top}$ the stacked gradient matrix, and
$j=(1,\ldots,1)^\top \in \mathbb{R}^n$ the length-$n$ vector of ones.

We will use two empirical second-moment matrices throughout. The first is the average curvature
\[
  H = \frac{1}{n}\sum_{i=1}^n H_i,
\]
which describes the local sensitivity of the fitted objective to parameter perturbations. The second is the empirical gradient second moment
\[
  H_F = \frac{1}{n}\sum_{i=1}^n g_i g_i^\top
      = \frac{1}{n}G^\top G,
\]
which describes the variability of the per-observation score or loss-gradient contributions. When the empirical gradients do not sum exactly to zero, as can occur with regularization, approximate optimization, or numerical error, we instead use the centered gradient second moment
\[
  H_{F,c}
  =
  \frac{1}{n}G^\top
  \left(I-\frac{1}{n}jj^\top\right)G .
\]
Thus $H$ controls local curvature, while $H_F$ or $H_{F,c}$ controls the directions in which data reweighting perturbs the estimator.

In classical robust statistics, the influence function of an estimator describes the first-order effect of an infinitesimal perturbation of the data distribution on the estimated parameter. In the present finite-sample empirical-risk setting, the relevant per-observation influence direction at $\hat\theta$ is
\[
  \operatorname{IF}_i
  =
  -H^{-1}g_i .
\]
Thus $\operatorname{IF}_i$ is the local parameter displacement associated with upweighting observation $z_i$, under the normalization induced by the averaged empirical loss. Ribbon combines these per-observation influence directions using centered Dirichlet weight perturbations.

The weighted estimator satisfies the first-order optimality condition
\[
\sum_{i=1}^n w_i \nabla_\theta \ell(z_i,\hat\theta_w) = 0.
\]
Expanding each gradient term around the unweighted ERM $\hat\theta$ gives
\[
\nabla_\theta \ell(z_i,\hat\theta_w)
\approx
 g_i + H_i(\hat\theta_w - \hat\theta).
\]
Substitution yields
\[
0 \approx \sum_{i=1}^n w_i g_i
\;+\
\left(\sum_{i=1}^n w_i H_i\right)(\hat\theta_w - \hat\theta).
\]
At an equally-weighted ERM, $\sum_i g_i=0$, and therefore
\[
\sum_{i=1}^n w_i g_i
=
\sum_{i=1}^n (w_i-1/n)g_i
=
\frac{1}{n}G^\top\tilde w,
\qquad
\tilde w = n w - j,
\]
where $j$ is the length-$n$ vector of ones defined above.
Replacing the weighted Hessian by the mean Hessian $H$ gives the influence approximation
\begin{equation}
  \Delta\theta_w
  \approx
  -H^{-1}\!\left(\frac{1}{n} G^{\!\top} \tilde{w}\right)
  =
  \frac{1}{n}\sum_{i=1}^n \tilde w_i\operatorname{IF}_i,
  \label{eq:influence}
\end{equation}
where $\Delta\theta_w = \hat\theta_w-\hat\theta$.

Equation~\eqref{eq:influence} gives a first--order approximation to the parameter change induced by Dirichlet reweighting. Each observation contributes its finite-sample influence direction $\operatorname{IF}_i$, scaled by the deviation of its weight from uniform weighting. All computation therefore reduces to evaluating products of $H^{-1}$ with vectors, which can be performed efficiently using iterative solvers or curvature approximations without explicitly forming or inverting $H$.

If the fitted estimator includes a fixed regularization penalty $r(\theta)$, we define the training objective as $L_n(\theta)+r(\theta)$, where $r$ is on the same scale as the averaged empirical loss and is not reweighted by bootstrap weights. The perturbation acts only on the empirical loss terms while the penalty remains fixed across bootstrap draws. The penalized first-order condition is
\[
0=
\sum_{i=1}^n w_i \nabla\ell(z_i,\hat\theta_w)+\nabla r(\hat\theta_w).
\]
Expanding around the unweighted penalized solution gives
\begin{equation}
\Delta\theta_w
\approx
-
\left(H+\nabla^2r(\hat\theta)\right)^{-1}
\sum_{i=1}^n (w_i-1/n)g_i.
\label{eq:penalized-if}
\end{equation}
Thus the centered perturbation $\sum_i(w_i-1/n)g_i$ is the appropriate driving term in both unregularized and penalized settings. In the unregularized exact-ERM case, this reduces to Eq.~\eqref{eq:influence}.
In penalized, approximately optimized, or numerically inexact settings, the centered gradient second moment
\[
H_{F,c}
=
\frac{1}{n}G^\top\left(I-\frac{1}{n}jj^\top\right)G
\]
should be used in covariance calculations in place of the uncentered quantity
\[
H_F = \frac{1}{n}G^\top G,
\]
unless $G^\top j=0$ holds to numerical tolerance.

\subsection{Predictive Distribution}

Once the model $f(x;\theta)$ has been trained to obtain $\hat{\theta}$, uncertainty in predictions arises from uncertainty in $\theta$ (epistemic/reducible uncertainty) and uncertainty due to residual error (aleatoric/irreducible uncertainty). Given a collection of influence--function parameter perturbations $\{\Delta\theta_w^{(b)}\}_{b=1}^B$ induced by Dirichlet reweighting, there are two methods to propagate epistemic uncertainty into prediction space.

The first way to predict at a test input $x_\ast$ is to explicitly resample predictions by evaluating the model at perturbed parameters,
\[
  f^{(b)}(x_\ast) = f(x_\ast;\,\hat{\theta} + \Delta\theta_w^{(b)}).
\]
This closely mirrors the predictive mechanism of the Bayesian bootstrap, but it requires $B$ additional forward passes per test input to generate a collection of predictions $\{f^{(b)}(x_\ast)\}_{b=1}^B$ from which empirical quantiles of interest can be calculated.

The second method uses a linearized predictive update. For a test input $x_\ast$, define the Jacobian
\[
  J_{x_\ast} = \frac{\partial f(x_\ast;\hat\theta)}{\partial\theta^\top},
\]
so that $J_{x_\ast}\Delta\theta$ has the same dimension as $f(x_\ast)$.
For the $b$--th bootstrap sample, associated with Dirichlet weights $w^{(b)}$, the linearized predictive update is
\begin{equation}
  f^{(b)}(x_\ast)
  \approx
  f(x_\ast; \hat\theta)
  + J_{x_\ast} \Delta\theta_w^{(b)},
  \qquad
  \Delta\theta_w^{(b)} \approx
  -H^{-1}\!\left(\frac{1}{n}G^{\!\top}\tilde{w}^{(b)}\right),
  \label{eq:predictive}
\end{equation}
where $\tilde{w}^{(b)} = n w^{(b)} - j$. We refer to this second implementation as the \emph{Ribbon pushforward} approximation, because the Dirichlet-induced parameter perturbations are pushed forward into prediction space through the local Jacobian $J_{x_\ast}$. In contrast, the direct perturbed-parameter implementation evaluates the nonlinear model at $\hat\theta+\Delta\theta_w^{(b)}$ for each draw. Unless otherwise stated, ``Ribbon pushforward'' in the experiments denotes the linearized predictive update in Eq.~\eqref{eq:predictive}. In penalized settings, $H^{-1}$ is replaced by
$H_r^{-1}=(H+\nabla^2r(\hat\theta))^{-1}$.
When predictive covariances are computed analytically rather than by Monte Carlo draws,
$H_F$ should likewise be replaced by the centered gradient second moment $H_{F,c}$ in penalized, approximately optimized, or numerically inexact settings, unless $G^\top j=0$ holds to numerical tolerance.

In regression problems, epistemic uncertainty is only one component of predictive uncertainty; the other arises from aleatoric uncertainty. In the experiments below, we use a shared homoskedastic residual estimate $\hat\sigma^2$ across all methods to isolate differences in epistemic uncertainty. More generally, Ribbon can be combined with a heteroskedastic residual model $\hat\sigma^2(x)$.
Posterior--predictive draws take the form
\[
  y^{(b)}(x_\ast)
  = f^{(b)}(x_\ast) + \varepsilon^{(b)}(x_\ast),
  \qquad
  \varepsilon^{(b)}(x_\ast) \sim N\!\big(0,\,\hat{\sigma}^2(x_\ast)\big).
\]

For classification tasks (and generalized models), predictive uncertainty is propagated through the latent output $\eta(x;\theta)$ prior to the softmax (link function) transformation:
\[
  \eta^{(b)}(x_\ast)
  \approx
  \eta(x_\ast; \hat{\theta})
  + J_{\eta,x_\ast}\Delta\theta_w^{(b)},
  \qquad
  \phi^{(b)}(x_\ast)
  = \mathrm{softmax}\!\big(\eta^{(b)}(x_\ast)\big).
\]
Because the mapping from latent space to probabilities is nonlinear, the distribution of $\phi^{(b)}(x_\ast)$ can deviate from Gaussianity even if the linearized logits are approximately normal.

\subsection{Computation and Algorithmic Procedure}

Ribbon is designed to be computationally efficient once a base model has been trained because uncertainty quantification is performed as a post--hoc operation that reuses gradients and curvature information from the trained model. The method requires only a single optimization run to obtain $\hat{\theta}$; subsequent bootstrap draws involve linear--algebraic operations.

After training, we estimate a local curvature operator $H$ using Hessian-vector products, generalized Gauss--Newton approximations, empirical Fisher approximations, Kronecker-factored approximate curvature estimates that preserve layer-wise structure (KFAC) \citep{martens2015kfac}, or low--rank and diagonal truncations with an associated damping parameter $\lambda$. Each bootstrap draw then requires a linear solve with $H$ or its factorized approximation and one Jacobian--vector product per test batch to propagate the perturbation.

\begin{algorithm}[t]
\caption{Ribbon}
\label{alg:ribbon}
\begin{algorithmic}[1]
\REQUIRE Training data $\{(x_i, y_i)\}_{i=1}^n$, base model $f(x;\theta)$, number of Monte Carlo draws $B$, concentration parameter grid $\mathcal A$
\ENSURE Predictive samples $\{f^{(b)}(x_\ast)\}_{b=1}^B$

\STATE Train a base model by optimizing the standard unweighted empirical risk or penalized empirical risk to obtain $\hat{\theta}$.
\STATE Construct a damped curvature operator $H$ or $H_r$ using a chosen approximation, such as GGN, KFAC, empirical Fisher, or low--rank SVD, with damping parameter $\lambda$.
\STATE Tune the Dirichlet concentration $\alpha$ on a validation split. For $\alpha=1$, the method approximates the standard Bayesian-bootstrap refitting target; for $\alpha\ne1$, it gives a calibrated Dirichlet-reweighting generalization.
\FOR{$\alpha \in \mathcal{A}$}
\FOR{$b = 1,\dots,B$}
  \STATE Draw weights $w^{(b)} \sim \mathrm{Dirichlet}(\alpha, \ldots, \alpha)$ and form $\tilde{w}^{(b)} = n w^{(b)} - j$.
  \STATE Compute the linearized parameter shift
        \[
          \Delta\theta_w^{(b)} \approx -H^{-1}\!\left(\frac{1}{n} G^{\!\top} \tilde{w}^{(b)}\right),
        \]
        or use the penalized version in Eq.~\eqref{eq:penalized-if} when a fixed penalty is present.
  \STATE Form predictive draws $f^{(b)}(x)$ via Eq.~\eqref{eq:predictive} or by evaluating $f(x;\hat\theta+\Delta\theta_w^{(b)})$.
\ENDFOR
\ENDFOR
\STATE Summarize predictive means, quantiles, coverage, RMSE, CRPS, NLL, Brier score, or ECE as appropriate for the task.
\end{algorithmic}
\end{algorithm}

Because each step after training involves matrix--vector products and linear solves, Ribbon is readily parallelizable across bootstrap samples. The post-hoc computational footprint is typically far smaller than repeated retraining and is dominated by matrix--vector solves and Jacobian-vector products.

\section{Properties of Ribbon}
\label{sec:properties}

We next outline the properties of the Ribbon estimator. Formal proofs are provided in Appendix~\ref{sec:theory}. The formal results are stated for regular finite-dimensional M-estimation. Neural-network experiments should be viewed as empirical evidence that the approximation remains useful with damping and structured curvature approximations, rather than as direct consequences of the classical asymptotic theory.

\subsection{Connection to the Laplace Approximation}
\label{sec:laplace}

To connect Ribbon to the classical Laplace approximation, consider first the standard Bayesian bootstrap with $\mathrm{Dirichlet}(1,\ldots,1)$ weights. In the unregularized exact-ERM case,
\begin{equation}
     \operatorname{Cov}(\Delta\theta_w)
  =
  \frac{1}{n+1} H^{-1} H_F H^{-1},
  \label{eq:dirichlet1-cov}
\end{equation}
which is asymptotically $n^{-1}H^{-1}H_FH^{-1}$.

When $\ell$ is the negative log likelihood of a correctly specified regular parametric model, the information identity gives $E(H_F)=E(H)$. Under the corresponding empirical convergence,
\[
  \mathrm{Cov}(\Delta\theta_w) \approx \frac{1}{n}H^{-1}.
\]
This covariance is identical, to first order, to that produced by the classical flat-prior Laplace approximation to the Bayesian posterior,
\[
  p(\theta\mid\mathcal{D}_n)
  \approx
  N\!\big(\hat{\theta}, H^{-1}/n\big).
\]
Hence, Ribbon and the flat-prior Laplace approximation coincide asymptotically under correct likelihood specification and standard Bayesian-bootstrap scaling.

For general fixed concentration $\alpha>0$, the same calculation gives
\begin{equation}
  \operatorname{Cov}(\Delta\theta_w)
  =
  \frac{1}{n\alpha+1}H^{-1}H_FH^{-1}.
  \label{eq:alpha-cov-main}
\end{equation}
Under correct likelihood specification this becomes approximately $(n\alpha+1)^{-1}H^{-1}$, so $\alpha$ acts as a temperature-like scaling of the Laplace covariance. The standard Bayesian bootstrap corresponds to $\alpha=1$.

When the model or loss is misspecified, the curvature of the empirical loss $H$ no longer matches the variability of the gradients, so that $H_F \neq H$.
For $\alpha=1$, the covariance is asymptotically
\begin{equation}
  \mathrm{Cov}(\Delta\theta_w)
  \approx \frac{1}{n}H^{-1}H_FH^{-1},
  \label{eq:sandwich}
\end{equation}
the familiar sandwich or heteroskedasticity--robust covariance estimator \citep{white1980heteroskedasticity}. Under misspecification, the usual curvature-based Laplace approximation reflects local posterior curvature and does not generally match the frequentist sampling covariance of the estimator. The influence-function bootstrap instead recovers the sandwich covariance, which is the robust first-order sampling covariance for M-estimators.

For a test input $x_\ast$ with Jacobian $J_{x_\ast} = \partial f(x_\ast; \hat{\theta})/\partial\theta^\top$, the predictive uncertainty for the linearized predictor follows directly:
\[
  \mathrm{Cov}[f(x_\ast)]
  = J_{x_\ast}\mathrm{Cov}(\Delta\theta_w)J_{x_\ast}^\top.
\]
For $\alpha=1$, this is asymptotically
\[
  \mathrm{Cov}[f(x_\ast)]
  \approx
  \begin{cases}
    J_{x_\ast}H^{-1}J_{x_\ast}^\top/n, & \text{if } H_F \approx H,\\[4pt]
    J_{x_\ast}H^{-1}H_FH^{-1}J_{x_\ast}^\top/n, & \text{otherwise.}
  \end{cases}
\]
In penalized or approximately optimized settings, the same expressions use $H_r$ and $H_{F,c}$ in place of $H$ and $H_F$ where appropriate.

The connection to Laplace provides a clear interpretation: Ribbon produces the same local Gaussian covariance as flat-prior Laplace under correct likelihood specification with $\alpha=1$, but replaces a fixed curvature-only covariance with perturbations of the empirical distribution. Under misspecification, Ribbon recovers a robust first-order covariance structure through the $H_F$ term.

\subsection{Behavior Away from the Training Distribution}

In many smooth models, predictive uncertainty can increase as inputs move into regions where the fitted function is weakly constrained by training data. Ribbon can exhibit this behavior through the sensitivity of predictions to influence-induced parameter perturbations, although this is not guaranteed for arbitrary architectures or distribution shifts.

Ribbon approximates the effect of Dirichlet--weighted resampling through perturbations of the model parameters,
\[
\theta^{(b)} \approx \hat{\theta} + \Delta \theta_w^{(b)}.
\]
Predictive uncertainty at a test input $x_\ast$ depends on the sensitivity of the model output at $x_\ast$ to such perturbations. For inputs well supported by the training data, the model output may be stable under small parameter perturbations. For inputs outside the training distribution, the model output may become more sensitive to changes in the parameters, leading to amplified variability across resampled predictions \citep{novak2018sensitivity}.

This behavior mirrors that of local Gaussian approximations such as the Laplace approximation, in which predictive uncertainty is governed by the interaction between parameter uncertainty and local sensitivity of the model at the test point. Ribbon does not provide formal guarantees of optimal coverage under arbitrary distributional shift. Its behavior away from the training distribution reflects the influence structure of the fitted model and the adequacy of the local approximation.

\subsection{Dirichlet Concentration and Calibration}

Ribbon draws random weights $w = (w_1, \ldots, w_n)$ from a symmetric $\mathrm{Dirichlet}(\alpha, \ldots, \alpha)$ distribution. Because $w_i > 0$ and $\sum_i w_i = 1$, we express the random perturbations on the count scale as
\[
  \tilde{w} = n w - j,
\]
so that $\mathbb{E}[\tilde{w}] = 0$ and $\sum_i \tilde{w}_i = 0$.
For symmetric Dirichlet weights,
\[
\operatorname{Cov}(\tilde w)
=
\frac{n}{n\alpha+1}
\left(I-\frac{1}{n}jj^{\top}\right).
\]
Using the centered perturbation $U_n(w)=\sum_i(w_i-1/n)g_i=n^{-1}G^\top\tilde w$, the conditional covariance is
\[
\operatorname{Cov}\!\left(\frac1nG^\top\tilde w\right)
=
\frac{1}{n\alpha+1}H_F
\]
when $G^\top j=0$. More generally, the same formula holds with $H_F$ replaced by the centered gradient second moment
\[
H_{F,c}=\frac1n G^\top\left(I-\frac1njj^\top\right)G.
\]
Thus the linearized parameter covariance is
\[
\operatorname{Cov}(\Delta\theta_w)
=
\frac{1}{n\alpha+1}H^{-1}H_FH^{-1}
\]
in the unregularized exact-ERM case, or the corresponding centered-gradient and penalized-curvature version when a fixed penalty is present.

The concentration parameter $\alpha$ acts as an inverse dispersion parameter: smaller $\alpha$ produces more variable Dirichlet weights and larger parameter dispersion, while larger $\alpha$ shrinks the weights toward uniformity and reduces uncertainty. In the limiting case $\alpha \to 0$, Dirichlet draws concentrate on sparse, highly uneven weight vectors, and the local influence approximation may become less accurate because the perturbation from uniform weights is no longer small. Conversely, as $\alpha \to \infty$, the weights collapse to the uniform vector and the procedure recovers the deterministic ERM solution with no bootstrap-induced uncertainty.

In practice, we tune $\alpha$ using a validation split to achieve empirical coverage or likelihood calibration:
\begin{itemize}
 \item \textbf{Regression.} For a given $\alpha$, Ribbon induces an epistemic predictive distribution via Dirichlet--weighted influence perturbations. The total predictive uncertainty is obtained by combining this epistemic component with an estimate of aleatoric variance. We tune $\alpha$ on a validation split so that the resulting predictive intervals achieve the target empirical coverage $(1-\tau)$.
  \item \textbf{Classification.} We select $\alpha$ using a validation criterion such as negative log-likelihood or Brier score, optionally breaking ties using expected calibration error (ECE).
\end{itemize}

Empirically, the Ribbon method is relatively insensitive to the precise choice of $\alpha$ once calibrated. However, including $\alpha$ as a tunable parameter provides a straightforward mechanism for aligning posterior--predictive coverage with empirical frequencies. The formal coverage result in Appendix~\ref{sec:theory} assumes that population coverage is continuous and monotone in $\alpha$ on the tuning range; this is a sufficient condition and may fail in finite samples or in highly nonmonotone models.

\section{Comparison Methods}
\label{sec:baselines}

\subsection{Implementation Details for Baseline Methods}
\label{sec:implementation}

This section summarizes the implementation details for the baseline uncertainty quantification methods used in our experiments. All methods are trained on the same datasets using identical network architectures and optimization settings unless otherwise noted. In all runtime tables below, reported times are \emph{post-hoc uncertainty-generation times only}. They exclude the shared base-model training time, data loading, and manual hyperparameter search. For methods that require additional fitted models after the base model, such as ensembles and bootstrap refits, the reported post-hoc time includes the additional training or refitting needed to generate the uncertainty draws.

\paragraph{Deep ensembles.}
For deep ensembles, we train independently initialized neural networks using the same architecture and training procedure. Each network is optimized by minimizing the mean squared error or cross-entropy loss, as appropriate for the task, using the Adam optimizer with a fixed learning rate and number of training steps. Epistemic predictive uncertainty is estimated from the empirical distribution of predictions across ensemble members. The post-hoc time reported for ensembles includes training the additional ensemble members beyond any shared reference fit and generating their predictions.

\paragraph{Monte Carlo dropout.}
For MC dropout, we train a neural network with dropout layers and retain dropout at test time. Predictive draws are generated by repeated stochastic forward passes through the trained model. The dropout probability is fixed across experiments or tuned on validation data, depending on the experiment. The post-hoc time reported for MC dropout includes only stochastic prediction generation after the trained dropout model is available.

\paragraph{Bayesian neural networks.}
Bayesian neural network baselines are implemented using Hamiltonian Monte Carlo with the No--U--Turn Sampler as provided by \texttt{NumPyro}. Independent Gaussian priors are placed on all network weights and biases, and a Gaussian likelihood with unknown noise variance is assumed for regression. Due to computational cost, this baseline is restricted to relatively small network architectures. The reported post-hoc time for this method includes posterior sampling and predictive generation.

\paragraph{Laplace approximation.}
The Laplace approximation is constructed around a deterministic maximum a posteriori estimate obtained by training a neural network to minimize the training loss with the specified regularization. The posterior over parameters is approximated by a Gaussian distribution whose covariance is given by a structured approximation to the Hessian or generalized Gauss--Newton matrix of the training loss. Prior precision, damping, likelihood scale, and curvature structure are tuned or specified consistently with the validation protocol for each experiment. Unless otherwise stated, references to Laplace in the empirical comparison denote the standard curvature-only Laplace approximation rather than a sandwich-corrected Laplace approximation. The reported post-hoc time for Laplace includes curvature construction, posterior sampling or linearized predictive propagation, and predictive summarization, but excludes the shared base-model fit.

\paragraph{Variational inference.}
Variational inference turns Bayesian sampling into an optimization problem involving the Kullback--Leibler divergence between an assumed posterior family and the true posterior. Because the true posterior is unknown, the objective is transformed into the evidence lower bound, which depends only on the observed data and the variational distribution. When reported, we use the \texttt{NumPyro} implementation of stochastic variational inference.

\paragraph{Bootstrap retraining.}
For bootstrap retraining, we generate $B$ bootstrap datasets by sampling the training data with replacement. A separate neural network is trained from scratch on each resampled dataset using the same architecture and optimization procedure as the base model. Although computationally expensive, bootstrap retraining serves as a robust nonparametric reference method. The reported post-hoc time includes training the bootstrap refits and generating predictions.

\paragraph{Bayesian bootstrap refitting.}
When computationally feasible, we also consider a direct Bayesian-bootstrap refitting baseline. For each draw, weights $w\sim\mathrm{Dirichlet}(1,\ldots,1)$ are sampled and the model is retrained or reoptimized using the weighted empirical loss. This baseline is the resampling target that Ribbon approximates through the influence-function linearization when $\alpha=1$.

\paragraph{Evaluation protocol.}
For all baseline methods, epistemic predictive distributions are summarized using empirical quantiles of the predictive draws. Predictive intervals and coverage statistics are computed pointwise over test inputs. Aleatoric uncertainty is incorporated post hoc using a shared noise variance estimate, as described in Section~\ref{sec:aleatoric}, to ensure a consistent comparison across methods.

\subsection{Estimating Aleatoric Uncertainty}
\label{sec:aleatoric}

All regression methods considered in our experiments produce a distribution over the conditional mean function $f(x;\theta)$ or its approximation, capturing epistemic uncertainty through variation in model parameters across draws or refits. To obtain predictive uncertainty for observations $y_\ast \mid x_\ast$, we additionally model aleatoric uncertainty via an observation noise model. Throughout the main regression experiments, we use a homoskedastic Gaussian likelihood,
\[
y \mid x,\theta,\sigma^2 \sim \mathcal{N}\!\big(f(x;\theta),\,\sigma^2\big),
\]
so that the predictive distribution marginalizes both parameter uncertainty and observation noise.
Given a collection of epistemic draws $\{f^{(b)}(x_\ast)\}_{b=1}^B$, we form predictive samples by adding independent noise
\[
y_\ast^{(b)}(x_\ast) = f^{(b)}(x_\ast) + \epsilon^{(b)}, \qquad
\epsilon^{(b)} \overset{\mathrm{iid}}{\sim} \mathcal{N}(0,\hat\sigma^2).
\]
We estimate the noise level from the training data using the same residual-variance estimator under the fitted mean predictor:
\[
\hat\sigma^2 = \frac{1}{n-1}\sum_{i=1}^n \left(y_i - f(x_i;\hat\theta)\right)^2.
\]
Thus the aleatoric component is estimated using the same protocol across methods, while the numerical value of $\hat\sigma^2$ may differ when the fitted mean predictors differ.

\section{Empirical Results}
\label{sec:experiments}

We now evaluate the empirical performance of Ribbon on regression and classification benchmarks. All experiments were conducted using a single base model fit per dataset for Ribbon, after which posterior--predictive draws were generated according to Section~\ref{sec:methods}. Comparisons are made to standard baselines for uncertainty quantification, including MC dropout, standard curvature-only Laplace approximations, deep ensembles, and, where computationally feasible, bootstrap refitting or HMC Bayesian neural networks.
All reported runtimes are post-hoc uncertainty-generation times only. For Ribbon and Laplace, this means the time after the shared base model has been trained. For ensembles, bootstrap retraining, and HMC Bayesian neural networks, this includes the additional fitting or sampling required to generate uncertainty draws. Runtime measurements exclude data loading and manual hyperparameter search.

\subsection{Synthetic Heteroskedastic Sine Regression}

We consider a one--dimensional regression problem with input--dependent Gaussian noise,
\[
y = \sin(x) + \sigma(x)\,\epsilon,
\qquad \epsilon \sim \mathcal{N}(0,1),
\]
where
\[
\sigma(x) = 0.05 + 0.30\left(\frac{|x|}{6}\right)^{1.2}.
\]
Training inputs are sampled uniformly as $x \sim \mathrm{Unif}[-\pi,\pi]$, which we treat as the in--distribution region. Out--of--distribution test points are drawn independently from $x \sim \mathrm{Unif}[4,6]$. Across $R=10$ independent trials, we generate fresh training datasets and evaluate uncertainty on independently sampled ID and OOD test sets.

Because the data-generating process is heteroskedastic while the fitted models use
comparatively simple homoskedastic uncertainty constructions, this experiment should be
viewed as a deliberately misspecified setting. Under such misspecification, Ribbon's
influence-function covariance can capture gradient variability that is not represented by
standard curvature-only Laplace approximations, helping explain its improved calibration
in this experiment.

All methods use the same neural network architecture: a fully connected network with a single hidden layer of width 10 and $\tanh$ activation. Models are trained using Adam to minimize mean squared error. We compare deep ensembles, bootstrap retraining, MC dropout, a standard curvature-only Laplace approximation around the MAP solution, a fully Bayesian neural network fit using \texttt{NumPyro}/NUTS, and the Ribbon pushforward method with a globally calibrated concentration parameter. Unless otherwise noted, Ribbon uses $B=100$ Dirichlet draws.

Uncertainty is assessed via epistemic coverage. For each method, we construct nominal 90\% bands from draws of the regression function $f(x)$ and compute empirical coverage with respect to the true function $\sin(x)$. This isolates epistemic uncertainty and excludes aleatoric noise from the evaluation metric. These latent-function intervals are not directly comparable to the predictive intervals reported for California Housing, which include an aleatoric residual component.

Table~\ref{tab:sine_results} reports the latent-function epistemic coverage results. Figure~\ref{fig:uq_epistemic} visualizes the corresponding epistemic uncertainty bands, while Figure~\ref{fig:uq_total} shows total predictive intervals after adding observation-noise uncertainty. This distinction is important: Table~\ref{tab:sine_results} evaluates uncertainty for the latent function $\sin(x)$, whereas Figure~\ref{fig:uq_total} visualizes uncertainty for noisy observations.

Several qualitative differences emerge. Deep ensembles substantially under-cover in both ID and OOD regions under this deliberately small architecture and training regime. This under-coverage indicates that independent initialization alone produced limited functional diversity for the latent mean function in this setting; it should not be interpreted as a general failure of ensemble methods. MC dropout achieves near-nominal coverage in distribution but fails on the OOD region, exhibiting zero coverage on $x \in [4,6]$. The standard curvature-only Laplace approximation under-covers in distribution and provides only moderate OOD expansion. Bootstrap retraining improves ID calibration relative to Laplace but still under-covers in the OOD region. Ribbon achieves near-nominal ID coverage while maintaining moderate OOD expansion, outperforming Laplace in ID calibration. The fully Bayesian neural network yields strong OOD coverage, though at substantially higher post-hoc computational cost.

\begin{table}[t]
\centering
\caption{Synthetic heteroskedastic sine regression results. We report empirical 90\% epistemic coverage of the true latent function $\sin(x)$ on in-distribution (ID) and out-of-distribution (OOD) test regions, along with average post-hoc uncertainty-generation wall-clock time in seconds. Values are mean $\pm$ standard deviation over $R=10$ independent trials. Times exclude the shared base-model fit, data loading, and manual hyperparameter search. For ensembles, bootstrap retraining, and NUTS, the post-hoc time includes the additional model fitting or posterior sampling needed to generate uncertainty draws.}
\label{tab:sine_results}
\begin{tabular}{lccc}
\toprule
\textbf{Method} & \textbf{ID Cov.} & \textbf{OOD Cov.} & \textbf{Time (s)} \\
\midrule
Deep Ensemble & $0.177 \pm 0.044$ & $0.373 \pm 0.146$ & $2.128 \pm 0.406$ \\
Bootstrap & $0.741 \pm 0.094$ & $0.605 \pm 0.217$ & $4.115 \pm 0.460$ \\
MC Dropout & $0.875 \pm 0.017$ & $0.000 \pm 0.000$ & $0.434 \pm 0.047$ \\
Laplace & $0.753 \pm 0.118$ & $0.745 \pm 0.106$ & $0.012 \pm 0.000$ \\
Ribbon & $0.902 \pm 0.071$ & $0.861 \pm 0.119$ & $0.033 \pm 0.000$ \\
NumPyro BNN (NUTS) & $0.823 \pm 0.079$ & $0.962 \pm 0.027$ & $19.068 \pm 1.423$ \\
\bottomrule
\end{tabular}
\end{table}

Overall, this experiment highlights the tradeoff between calibration and post-hoc uncertainty cost. Full posterior inference provides strong OOD uncertainty at high cost, while Ribbon achieves near-nominal in-distribution calibration with minimal additional computational overhead after the base model is trained. Local Gaussian approximations and ensemble methods exhibit varying degrees of under-coverage, particularly outside the training region.

\FloatBarrier

\begin{figure}[t]
\centering
\includegraphics[width=\linewidth]{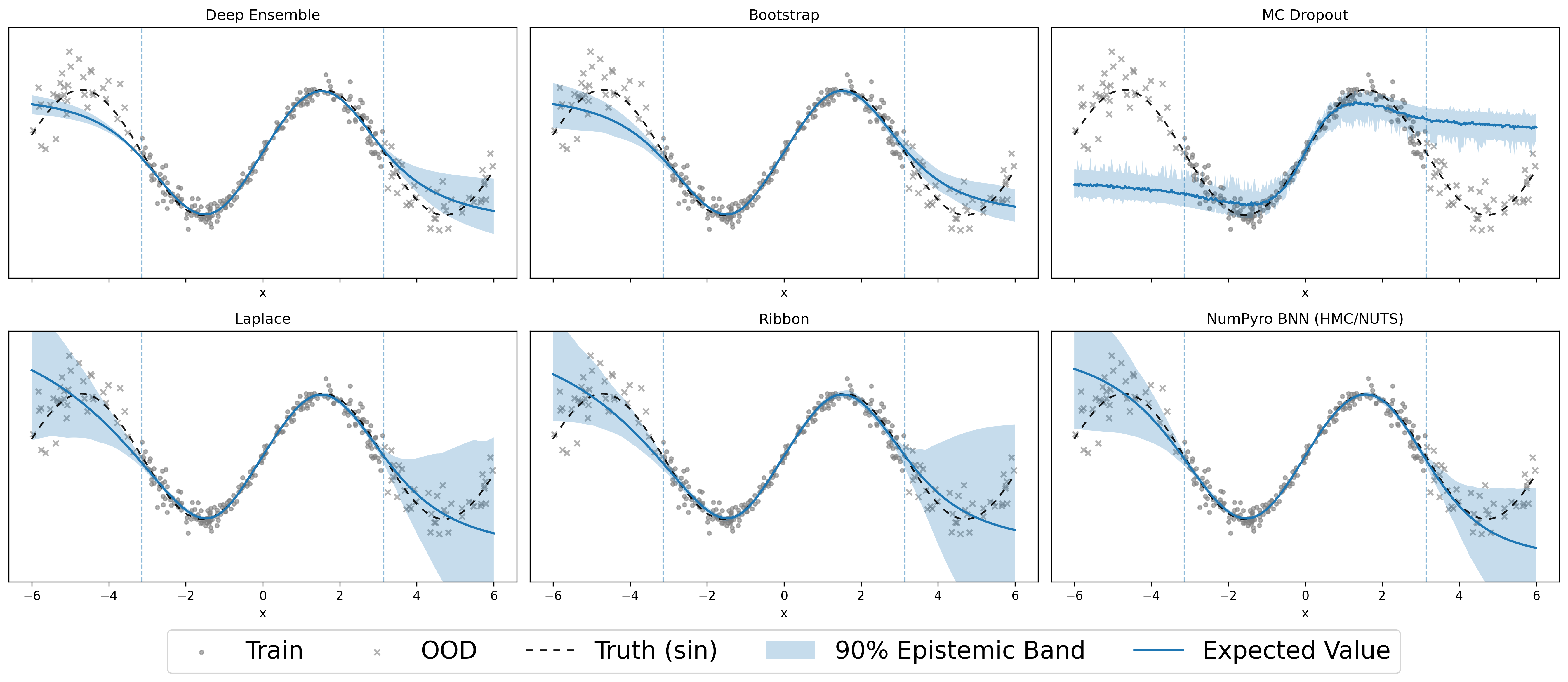}
\caption{Epistemic uncertainty on the heteroskedastic sine function. Each panel shows the predictive mean, 90\% epistemic interval, noisy data, and the true function. Gray shading marks the in-domain region $x\in[-\pi,\pi]$, and red shading marks the OOD region.}
\label{fig:uq_epistemic}
\end{figure}

\begin{figure}[t]
\centering
\includegraphics[width=\linewidth]{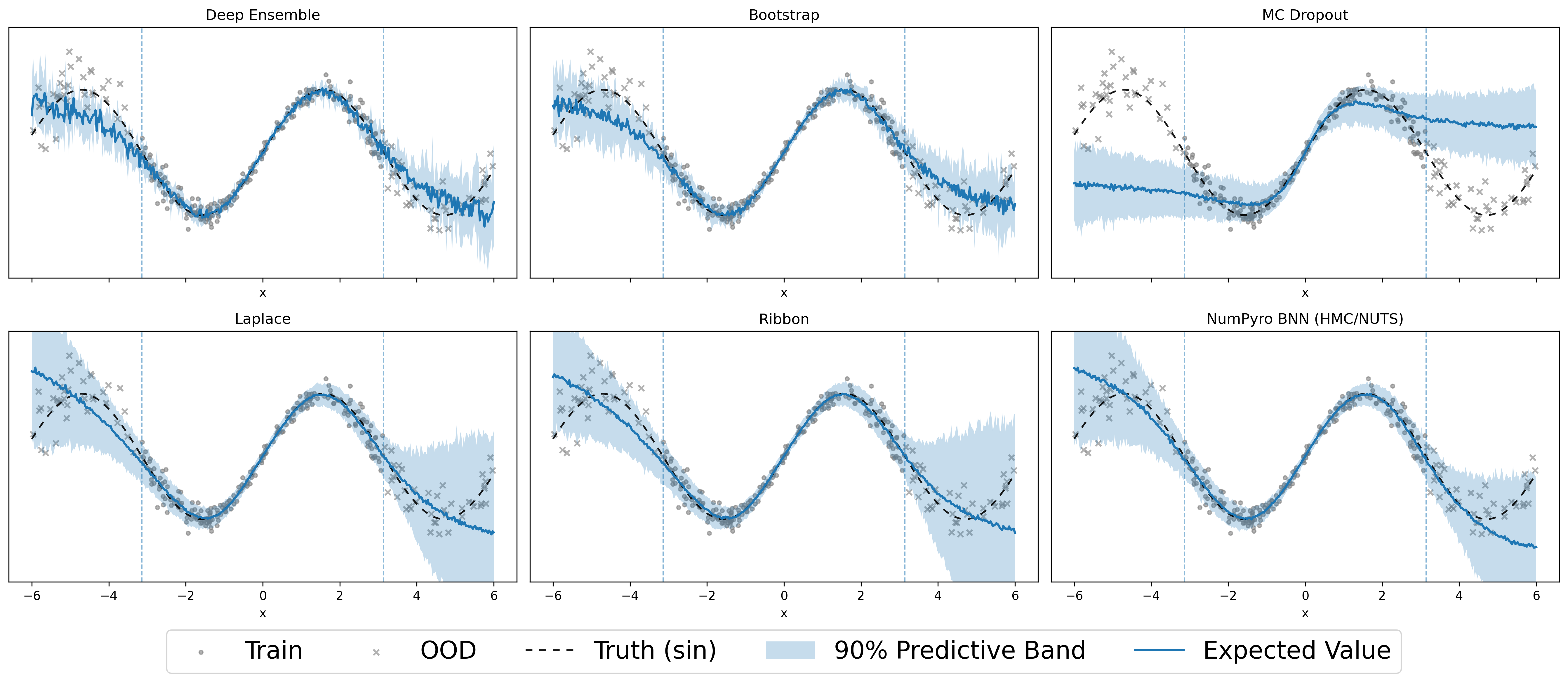}
\caption{Total predictive uncertainty on the sine dataset. Shaded regions represent 90\% predictive intervals obtained by combining epistemic variability with observation-noise uncertainty according to the method-specific predictive construction.}
\label{fig:uq_total}
\end{figure}

\FloatBarrier

\subsection{California Housing Regression}

We next evaluate predictive performance on the California Housing dataset. All input features are standardized using statistics from the training split, and a fixed train/validation/test partition is used across all methods. We compare Ribbon against deep ensembles, MC dropout, and the standard curvature-only Laplace approximation. For Ribbon, we use the Ribbon pushforward approximation defined in Section~\ref{sec:methods}, with validation tuning of the Dirichlet concentration parameter $\alpha$ to target 90\% predictive coverage, using $B=100$ Dirichlet draws.

The ID and OOD subsets are defined using distance from the training feature distribution after standardization. Specifically, test points in a pre-specified upper tail of Mahalanobis distance to the training feature mean are treated as OOD, while the remaining test points are treated as ID. This definition captures covariate shift relative to the training feature distribution, but it should not be interpreted as a guarantee of performance under arbitrary distribution shift. The continuous ranked probability score (CRPS) is computed from the empirical predictive distribution, with lower values indicating sharper and better-calibrated predictive distributions.

\begin{table}[t]
\centering
\caption{California Housing regression results. We report empirical coverage of nominal 90\% prediction intervals on in-distribution (ID), overall test (ALL), and out-of-distribution (OOD) subsets, along with distributional accuracy measured by CRPS and post-hoc uncertainty-generation time in seconds. Times exclude the shared base-model fit, data loading, and manual hyperparameter search.}
\label{tab:cahousing_cov_crps_time}
\begin{tabular}{lccccc}
\toprule
Method & ID Cov. (\%) & ALL Cov. (\%) & OOD Cov. (\%) & CRPS $\downarrow$ &  Time (s) \\
\midrule
Ribbon Pushforward & 91.8 & 91.4 & 89.6 & 0.3573 & 6.1 \\
Deep Ensemble & 91.7 & 91.4 & 90.0 & 0.3927 & 8.1 \\
MC Dropout    & 92.8 & 92.6 & 91.5 & 0.3593 & 9.7 \\
Laplace       & 92.0 & 95.8 & 99.2 & 0.4226 & 11.3 \\
\bottomrule
\end{tabular}
\end{table}

Table~\ref{tab:cahousing_cov_crps_time} summarizes empirical coverage of nominal 90\% prediction intervals, distributional accuracy measured by CRPS, and post-hoc uncertainty-generation time. Ribbon achieves near-nominal coverage on the test set, with competitive OOD performance and the lowest CRPS among the methods considered. MC dropout performs similarly in CRPS and slightly over-covers, while deep ensembles achieve similar coverage but somewhat worse CRPS.

The comparison with deep ensembles should be interpreted carefully. Because the ensemble averages multiple independently trained networks, its performance can reflect both uncertainty quantification and improved optimization or model averaging relative to a single fitted network. Thus the California Housing experiment does not fully isolate the effect of the post-hoc UQ method from the quality of the fitted mean function. The cleaner comparison among single-fit uncertainty methods is Ribbon versus MC dropout and Laplace-style local approximations around a single trained model.

In contrast, the standard curvature-only Laplace approximation performs poorly on this benchmark under the reported implementation, exhibiting over-coverage and the worst CRPS. This result should be interpreted in light of the specific curvature approximation, prior precision, damping, and likelihood-scale choices used in the experiment. The strong over-coverage suggests that this local Gaussian posterior approximation is too diffuse under the reported implementation choices.

For the California Housing experiments, all single-fit methods use the same base predictive model: a fully connected feedforward neural network with a single hidden layer of width 64 and $\tanh$ activation. The total number of trainable parameters is 641.

\subsection{MNIST Classification}

We conclude with the MNIST handwritten--digit classification benchmark. We train a deliberately small convolutional neural network under a constrained training budget to emphasize calibration behavior rather than peak classification accuracy. The network consists of two convolutional layers followed by a single fully connected classification layer and is evaluated on the held--out test set. In this experiment, Ribbon and Laplace use the same full-parameter damped PSD--GGN curvature approximation around the shared base model. For Ribbon, the Dirichlet concentration parameter $\alpha$ is selected on a validation set using Brier score, ECE, or NLL, giving three reported calibrated variants. We compare these local full-parameter methods against MC dropout and a deep ensemble.

\begin{table}[t]
\centering
\caption{MNIST-10 classification results for the CNN full-parameter PSD--GGN experiment. We report mean $\pm$ standard deviation of test-set accuracy, Brier score, expected calibration error (ECE), and negative log-likelihood (NLL) over repeated trials. Lower values indicate better performance for Brier, ECE, and NLL.}
\label{tab:mnist_results}
\begin{tabular}{lcccc}
\toprule
Method & Accuracy (\%) $\uparrow$ & Brier $\downarrow$ & ECE $\downarrow$ & NLL $\downarrow$ \\
\midrule
MC Dropout       & $85.41 \pm 1.39$ & $0.2417 \pm 0.0179$ & $0.1302 \pm 0.0098$ & $0.5380 \pm 0.0339$ \\
Deep Ensemble    & $87.66 \pm 0.69$ & $0.2028 \pm 0.0091$ & $0.0976 \pm 0.0146$ & $0.4582 \pm 0.0196$ \\
Laplace Full     & $85.85 \pm 1.25$ & $0.2201 \pm 0.0171$ & $0.0791 \pm 0.0067$ & $0.4863 \pm 0.0310$ \\
Ribbon Full (Brier) & $85.85 \pm 1.26$ & $0.2204 \pm 0.0172$ & $0.0799 \pm 0.0066$ & $0.4866 \pm 0.0311$ \\
Ribbon Full (ECE)   & $85.84 \pm 1.26$ & $0.2204 \pm 0.0172$ & $0.0797 \pm 0.0065$ & $0.4867 \pm 0.0313$ \\
Ribbon Full (NLL)   & $85.82 \pm 1.28$ & $0.2204 \pm 0.0172$ & $0.0795 \pm 0.0067$ & $0.4868 \pm 0.0312$ \\
\bottomrule
\end{tabular}
\end{table}

Table~\ref{tab:mnist_results} summarizes test--set performance. The deep ensemble achieves the highest accuracy and the best Brier score and NLL, reflecting the benefit of independently trained models even under the constrained training budget. The full-parameter Laplace and Ribbon variants are substantially better calibrated than MC dropout by ECE and also improve Brier score and NLL relative to MC dropout. The Ribbon variants are nearly indistinguishable from full-parameter Laplace on this benchmark, with small differences across the Brier-, ECE-, and NLL-selected concentration values. This behavior is consistent with the theoretical connection in Section~\ref{sec:laplace}: in relatively benign classification settings with a shared local curvature approximation and global scale tuning, Ribbon can closely track curvature-based Laplace rather than dominate it. The main empirical conclusion from this MNIST experiment is therefore that Ribbon is competitive and stable for full-parameter local uncertainty quantification, while the ensemble remains the strongest method on accuracy and proper scoring rules in this particular setting.

\section{Discussion and Future Directions}
\label{sec:discussion}

This work introduces Ribbon, a scalable method for predictive uncertainty quantification based on influence--function linearization and Dirichlet reweighting. Conceptually, Ribbon combines a Bayesian-bootstrap-like source of uncertainty with a frequentist calibration step. The Bayesian-bootstrap-like component comes from approximating Dirichlet-reweighted refitting through a gradient-based influence-function linearization, while frequentist calibration is introduced by tuning the Dirichlet concentration parameter $\alpha$ on validation data.

For $\alpha=1$, Ribbon approximates the standard Dirichlet-reweighted Bayesian-bootstrap refitting target. For general tuned $\alpha$, it becomes a calibrated Dirichlet-reweighting generalization whose global covariance scale is adjusted by validation.

The theoretical analysis in Appendix~\ref{sec:theory} demonstrates that, in regular finite-dimensional M-estimation problems, Ribbon achieves first--order equivalence to both the flat-prior Laplace approximation and the Bayesian bootstrap under appropriate conditions. Under correct likelihood specification and $\alpha=1$, the asymptotic covariance of Ribbon coincides with the Laplace covariance. Under misspecification, Ribbon generalizes naturally to the robust sandwich form $H^{-1}H_FH^{-1}/n$, capturing additional sampling variability due to heteroskedasticity, outliers, or other departures from the assumed likelihood.

The Dirichlet concentration parameter $\alpha$ plays a transparent role in controlling dispersion. When tuned via validation, $\alpha$ can restore nominal coverage in settings where the sufficient monotonicity assumptions hold approximately. This validation step is not a finite-sample coverage guarantee; rather, it is a consistency result under continuity, monotonicity, and uniform convergence assumptions.

From a computational standpoint, Ribbon requires only a single model fit and a curvature operator derived from the training loss. Curvature can be estimated via generalized Gauss--Newton, KFAC, empirical Fisher, or low--rank approximations, all of which are compatible with modern automatic-differentiation frameworks. The method's post-hoc cost is dominated by matrix--vector solves and Jacobian--vector products, scaling linearly with the number of bootstrap draws $B$.

The empirical studies show that Ribbon often improves calibration relative to standard curvature-only Laplace approximations and remains competitive with dropout and ensembles. In small, well-specified problems, Laplace may remain competitive; however, in settings characterized by mild misspecification or heteroskedasticity, Ribbon can provide more accurate predictive intervals without the need for multiple model fits.

Despite these advantages, Ribbon inherits the local nature of influence--function approximations. Its validity depends on smoothness of the loss landscape and well-conditioned curvature. In highly nonconvex, overparameterized, or discontinuous regimes, the linearization may fail to capture nonlinear parameter interactions, leading to under- or over-estimated uncertainty. Damping, structured curvature approximations, and optional clipping of influence updates can mitigate but not eliminate this limitation. The formal asymptotic results do not by themselves justify arbitrary overparameterized neural-network settings; the neural-network experiments should be interpreted as empirical evidence for the practical usefulness of the approximation.

Furthermore, tuning $\alpha$ only rescales global dispersion. It cannot correct arbitrary directional misspecification in $H^{-1}H_FH^{-1}$, nor can it guarantee reliable uncertainty under arbitrary distribution shift. For classification, nonlinear transformations such as the softmax can also produce skewed or heavy-tailed predictive probability distributions, suggesting that post-hoc calibration may remain useful.

Several extensions merit further investigation. Higher--order influence expansions could enlarge the region of validity of the linearization. Adaptive curvature learning could improve the fidelity of the $H^{-1}$ approximation at scale. Integrating Ribbon with conformal prediction offers a route to finite-sample coverage guarantees, complementing the asymptotic calibration result. Finally, extending the framework to structured prediction, sequence modeling, and test-time adaptation represents a promising avenue for future research.

\bibliographystyle{plainnat}
\bibliography{references}

\clearpage
\appendix
\section{Theoretical Results}
\label{sec:theory}

Throughout, we work under the following regularity conditions. The assumptions are classical local M-estimation assumptions and are strongest when the parameter dimension is fixed or grows slowly relative to sample size. They do not by themselves justify arbitrary overparameterized neural-network settings.

\begin{assumption}\label{assump:regularity}
Let $\ell(z_i,\theta)$ be the per-sample loss and let $\hat\theta$ be the unique minimizer of $L_n(\theta)=n^{-1}\sum_{i=1}^n\ell(z_i,\theta)$. Assume:
\begin{enumerate}
\item[(A1)] \textbf{Smoothness and identifiability.} In a neighborhood $\mathcal{N}$ of $\hat\theta$, the map $\theta\mapsto \nabla_\theta \ell(z_i,\theta)$ is twice continuously differentiable for almost every $z_i$, and $\nabla_\theta L_n(\hat\theta)=0$ with $\hat\theta$ unique.
\item[(A2)] \textbf{Well-conditioned curvature.} The mean Hessian $H=n^{-1}\sum_{i=1}^n\nabla^2_\theta\ell(z_i,\hat\theta)$ is invertible and $\|H^{-1}\|=O_p(1)$. Moreover, the Hessian is locally Lipschitz in $\theta$.
\item[(A3)] \textbf{Gradient moments and empirical stability.} Writing $g_i=\nabla_\theta\ell(z_i,\hat\theta)$ and $H_i=\nabla^2_\theta\ell(z_i,\hat\theta)$, we have $n^{-1}\sum_{i=1}^n\|g_i\|^2=O_p(1)$ and $H_F=n^{-1}\sum_{i=1}^n g_i g_i^\top=O_p(1)$.
\item[(A4)] \textbf{Weighted optimum exists and is close.} For bootstrap weights $w$ with $\tilde{w}=nw-j$ and fixed $\alpha\in[\alpha_{\min},\alpha_{\max}]$ with $0<\alpha_{\min}<\alpha_{\max}<\infty$, the weighted minimizer $\hat\theta_w$ exists in $\mathcal{N}$ and $\|\hat\theta_w-\hat\theta\|=O_p(n^{-1/2})$.
\item[(A5)] \textbf{Weighted Hessian stability.} For the Dirichlet weights considered,
\[
\left\|\sum_{i=1}^n w_i H_i - \frac1n\sum_{i=1}^n H_i\right\|=O_p(n^{-1/2}).
\]
\end{enumerate}
\end{assumption}

\begin{theorem}[Asymptotic Covariance of Ribbon]
\label{thm:asymp-cov}
Suppose Assumption~\ref{assump:regularity} holds. Let $g_i = \nabla_\theta \ell(z_i,\hat\theta)$, $H = n^{-1}\sum_i \nabla_\theta^2 \ell(z_i,\hat\theta)$, and $H_F = n^{-1}\sum_i g_i g_i^{\!\top}$.
Under the $\mathrm{Dirichlet}(\alpha, \ldots, \alpha)$ weighting scheme and the influence update
\[
\Delta\theta_w = -H^{-1}\!\left(\frac{1}{n}G^\top \tilde{w}\right),\qquad G=[g_1,\dots,g_n]^\top,\; \tilde{w}=nw-j,
\]
with fixed $\alpha>0$ and $G^\top j=0$, we have, conditional on the data,
\[
  \mathrm{Var}(\Delta\theta_w)
  = \frac{1}{n\alpha+1}\,H^{-1} H_F H^{-1}.
\]
For $\alpha=1$,
\[
  \mathrm{Var}(\Delta\theta_w)
  = \frac{1}{n}\,H^{-1} H_F H^{-1} + o_p(n^{-1}).
\]
If the loss is a correctly specified negative log likelihood so that $H_F\to_p H$, then for $\alpha=1$,
$\mathrm{Var}(\Delta\theta_w) = n^{-1}H^{-1} + o_p(n^{-1})$.
For penalized objectives or nonzero empirical mean gradients, the same result holds with $H$ replaced by the penalized curvature and $H_F$ replaced by the centered gradient second moment.
\end{theorem}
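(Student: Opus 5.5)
The plan is to treat the data, and hence $G$, $H$, $H_F$, as fixed, so that the only randomness in $\Delta\theta_w$ is the Dirichlet vector $w$. Since $\Delta\theta_w=-n^{-1}H^{-1}G^\top\tilde w$ is a linear map of $\tilde w$, I will compute $\operatorname{Cov}(\tilde w)$ exactly and then sandwich it. For a symmetric $\mathrm{Dirichlet}(\alpha,\dots,\alpha)$ vector with $\alpha_0=n\alpha$, the standard moments are $\mathbb{E}[w_i]=1/n$, $\operatorname{Var}(w_i)=\frac{(1/n)(1-1/n)}{n\alpha+1}$ and $\operatorname{Cov}(w_i,w_k)=-\frac{1/n^2}{n\alpha+1}$ for $i\neq k$. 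These combine into
\[
\operatorname{Cov}(w)=\frac{1}{n(n\alpha+1)}\Bigl(I-\tfrac1n jj^\top\Bigr),
\qquad
\operatorname{Cov}(\tilde w)=n^2\operatorname{Cov}(w)=\frac{n}{n\alpha+1}\Bigl(I-\tfrac1n jj^\top\Bigr),
\]
which matches the expression stated in the calibration subsection. Because $\mathbb{E}[\tilde w]=0$, the mean of $\Delta\theta_w$ is zero.

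Next I push this through the linear map:
\[
\operatorname{Var}(\Delta\theta_w)=\frac{1}{n^2}H^{-1}G^\top\operatorname{Cov}(\tilde w)GH^{-1}
=\frac{1}{n\alpha+1}H^{-1}\Bigl(\tfrac1n G^\top G-\tfrac1{n^2}G^\top jj^\top G\Bigr)H^{-1}.
\]
When $G^\top j=0$, which is the first-order condition $\nabla L_n(\hat\theta)=0$ from (A1), the rank-one term vanishes. This leaves exactly $\frac{1}{n\alpha+1}H^{-1}H_FH^{-1}$, using the symmetry of $H$. When $G^\top j\neq0$, the bracket is by definition $H_{F,c}$, and nothing in the computation used the specific form of $H$ beyond invertibility. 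Replacing $H$ by the penalized curvature $H_r$ in the update therefore gives the penalized and centered version verbatim.

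For $\alpha=1$, write $\frac{1}{n+1}=\frac1n-\frac{1}{n(n+1)}$. The remainder is then $\frac{1}{n(n+1)}H^{-1}H_FH^{-1}$, whose norm is at most $n^{-2}\|H^{-1}\|^2\|H_F\|=O_p(n^{-2})=o_p(n^{-1})$ by (A2) and (A3). For the correctly specified case, $H_F\to_p H$ and $\|H^{-1}\|=O_p(1)$ give $H^{-1}H_FH^{-1}-H^{-1}=H^{-1}(H_F-H)H^{-1}=o_p(1)$. Multiplying by $1/(n+1)$ yields $n^{-1}H^{-1}+o_p(n^{-1})$.

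The main point of care is not an analytic obstacle but bookkeeping. The statement is a conditional, finite-sample identity for the linearized update, so assumptions (A4) and (A5), which concern the true refit $\hat\theta_w$, are not actually needed here. The only subtle step is making sure the stochastic orders refer to the data-generating randomness while the variance is taken over $w$ alone. I would state explicitly that $\operatorname{Var}$ is conditional on $\mathcal D_n$ and that the $o_p$ terms are in the data. I would also note that the convergence $H_F\to_p H$ requires $H_F$ and $H$ to approach a common limit $\mathcal I(\theta_0)$, an assumption I would simply invoke as given.
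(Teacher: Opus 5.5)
Your proposal is correct and follows essentially the same route as the paper. Both arguments condition on the data, use $\operatorname{Cov}(\tilde w)=\frac{n}{n\alpha+1}\bigl(I-\frac1n jj^\top\bigr)$, push it through the linear map $-n^{-1}H^{-1}G^\top$, drop the rank-one term via $G^\top j=0$ (or keep it as $H_{F,c}$ otherwise), and expand $(n+1)^{-1}$ for $\alpha=1$; your explicit $O_p(n^{-2})$ remainder bound from (A2)--(A3) and your note that (A4)--(A5) go unused only make precise what the paper leaves implicit.
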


\begin{proof}
Condition on the data $\{z_i\}_{i=1}^n$ so that $G$ and $H$ are deterministic.
For $w\sim\mathrm{Dirichlet}(\alpha,\ldots,\alpha)$,
\[
\mathrm{Cov}(\tilde w)
=
\frac{n}{n\alpha+1}\left(I-\frac1njj^\top\right).
\]
Therefore,
\[
\mathrm{Var}\!\left(\frac{1}{n}G^\top \tilde{w}\right)
= \frac{1}{n^2} G^\top \mathrm{Cov}(\tilde{w}) G
= \frac{1}{n(n\alpha+1)} G^\top\!\left(I-\frac{1}{n}jj^\top\right)G.
\]
If $G^\top j=0$, the rank-one correction vanishes and
\[
\mathrm{Var}\!\left(\frac{1}{n}G^\top \tilde{w}\right)
= \frac{1}{n(n\alpha+1)}G^\top G
= \frac{1}{n\alpha+1}H_F.
\]
Propagating through the linear map $-H^{-1}$ gives the stated covariance. For $\alpha=1$, $(n+1)^{-1}=n^{-1}+O(n^{-2})$. Under correct likelihood specification, the information identity and empirical convergence imply $H_F\to_p H$.
If $G^\top j\ne0$, the same proof gives the centered matrix $H_{F,c}=n^{-1}G^\top(I-n^{-1}jj^\top)G$.
\end{proof}

\begin{corollary}[Epistemic Predictive Covariance]
\label{cor:predictive}
Let $J_{x_\ast} = \partial f(x_\ast;\hat\theta)/\partial\theta^\top$. Then
\[
  \mathrm{Var}\!\big[f(x_\ast)\big]
  = J_{x_\ast}\mathrm{Var}(\Delta\theta_w)J_{x_\ast}^\top
  =
  \frac{1}{n\alpha+1}J_{x_\ast}H^{-1}H_FH^{-1}J_{x_\ast}^\top
\]
in the unregularized exact-ERM case, with the corresponding centered and penalized replacements otherwise.
\end{corollary}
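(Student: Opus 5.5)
The corollary follows by pushing the covariance in Theorem~\ref{thm:asymp-cov} forward through the fixed linear map $J_{x_\ast}$. Throughout I condition on the training data, as in the proof of Theorem~\ref{thm:asymp-cov}. Then $\hat\theta$, $G$, $H$ and $J_{x_\ast}$ are all deterministic, and the only randomness comes from the Dirichlet weights $w$.

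\emph{Step 1: fix the random object.} The statement is about the Ribbon pushforward of Eq.~\eqref{eq:predictive}, namely $f^{(b)}(x_\ast)=f(x_\ast;\hat\theta)+J_{x_\ast}\Delta\theta_w$. So $\mathrm{Var}[f(x_\ast)]$ is read as the conditional variance of this linearized predictor over $w$. The first term $f(x_\ast;\hat\theta)$ is a constant given the data and contributes nothing. Hence $\mathrm{Var}[f(x_\ast)]=\mathrm{Var}(J_{x_\ast}\Delta\theta_w)$.

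\emph{Step 2: apply the linear-map rule.} For any fixed matrix $A$ and random vector $V$ with finite second moments, $\mathrm{Var}(AV)=A\,\mathrm{Var}(V)A^\top$. Finite second moments hold here because the Dirichlet weights are bounded in $[0,1]$. Taking $A=J_{x_\ast}$ and $V=\Delta\theta_w$ gives the first equality. This also covers vector-valued outputs such as logits, where $J_{x_\ast}$ has several rows.

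\emph{Step 3: substitute the parameter covariance.} By Theorem~\ref{thm:asymp-cov}, in the unregularized exact-ERM case with $G^\top j=0$,
\[
\mathrm{Var}(\Delta\theta_w)=\frac{1}{n\alpha+1}H^{-1}H_FH^{-1}.
\]
Inserting this between $J_{x_\ast}$ and $J_{x_\ast}^\top$ yields the second equality. In the penalized case or when $G^\top j\ne0$, the final clause of Theorem~\ref{thm:asymp-cov} replaces $H$ by $H_r$ and $H_F$ by $H_{F,c}$. The same sandwich argument then gives the stated replacements.

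\emph{Main obstacle.} There is no analytic difficulty here. The one point needing care is interpretive: the equality is exact only for the linearized predictor, and it is a variance conditional on the data. If one instead evaluates $f(x_\ast;\hat\theta+\Delta\theta_w)$ directly, a Taylor remainder of order $\|\Delta\theta_w\|^2=O_p(n^{-1})$ appears. That would give the formula only up to $o_p(n^{-1})$, using (A1)--(A2) and the $O_p(n^{-1/2})$ size of $\Delta\theta_w$. I would state the exact result for the pushforward and, if needed, note this remainder bound in a remark.
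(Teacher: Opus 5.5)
Your argument is correct and is the paper's argument: condition on the data, apply $\mathrm{Var}(AV)=A\,\mathrm{Var}(V)A^\top$ to the deterministic linear map $v\mapsto J_{x_\ast}v$, and substitute Theorem~\ref{thm:asymp-cov}, including its centered and penalized clause. Your optional remark about evaluating $f(x_\ast;\hat\theta+\Delta\theta_w)$ directly goes beyond the statement, and it would need more than you cite: smoothness of $f$ in $\theta$ at the test input, which (A1)--(A2) do not provide since they concern $\ell$, and moment control of the Taylor remainder rather than only $O_p$ bounds. This does not affect the corollary itself.
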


\begin{proof}
Immediate from Theorem~\ref{thm:asymp-cov} and linearity of variance under the deterministic linear map $v\mapsto J_{x_\ast}v$.
\end{proof}

\begin{theorem}[Local Accuracy of the Ribbon Update]
\label{thm:update-accuracy}
Under Assumption~\ref{assump:regularity}, let
\[
\hat\theta_w = \arg\min_\theta \sum_{i=1}^n w_i \ell(z_i,\theta),
\qquad
\tilde\theta_w = \hat\theta - H^{-1}\!\left(\frac{1}{n}G^\top \tilde{w}\right),
\quad \tilde{w} = n w - j.
\]
Then, conditional on the data,
\[
\|\hat\theta_w - \tilde\theta_w\| = O_p(n^{-1}).
\]
\end{theorem}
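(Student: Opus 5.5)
The plan is a standard one-step Newton/Taylor argument around $\hat\theta$, using the weighted first-order condition. Write $\delta=\hat\theta_w-\hat\theta$; by (A4), $\|\delta\|=O_p(n^{-1/2})$ and $\hat\theta_w\in\mathcal N$. The weighted estimator satisfies $\sum_i w_i\nabla_\theta\ell(z_i,\hat\theta_w)=0$.

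\textbf{Step 1: integral-form Taylor expansion.} Using (A1), expand each gradient exactly:
\[
\nabla_\theta\ell(z_i,\hat\theta_w)=g_i+\int_0^1\nabla^2_\theta\ell(z_i,\hat\theta+t\delta)\,dt\;\delta .
\]
Summing with weights gives $0=\sum_i w_ig_i+\bar H_w\delta$, where $\bar H_w=\sum_i w_i\int_0^1\nabla^2\ell(z_i,\hat\theta+t\delta)\,dt$. Since $G^\top j=0$, we have $\sum_iw_ig_i=n^{-1}G^\top\tilde w$. Moreover, $\tilde\theta_w-\hat\theta=-H^{-1}n^{-1}G^\top\tilde w$.

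\textbf{Step 2: isolate the error.} Write $\bar H_w=H+E$ with $E=E_1+E_2$. Here $E_1=\sum_iw_iH_i-H$, which is $O_p(n^{-1/2})$ by (A5). The term $E_2=\sum_iw_i\int_0^1[\nabla^2\ell(z_i,\hat\theta+t\delta)-H_i]\,dt$ is bounded by the local Lipschitz property in (A2): $\|E_2\|\le\sum_iw_iL_i\|\delta\|$. I will read (A2) as giving a uniform (or weighted-average bounded) Lipschitz constant, so that $\sum_iw_iL_i=O_p(1)$, and hence $\|E_2\|=O_p(n^{-1/2})$. Substituting into the first-order condition gives $H\delta=-n^{-1}G^\top\tilde w-E\delta$, and therefore
\[
\hat\theta_w-\tilde\theta_w=\delta+H^{-1}n^{-1}G^\top\tilde w=-H^{-1}E\delta .
\]

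\textbf{Step 3: combine the rates.} Using $\|H^{-1}\|=O_p(1)$ from (A2), $\|E\|=O_p(n^{-1/2})$, and $\|\delta\|=O_p(n^{-1/2})$ from (A4), we get $\|\hat\theta_w-\tilde\theta_w\|\le\|H^{-1}\|\,\|E\|\,\|\delta\|=O_p(n^{-1})$. As a consistency check, Theorem~\ref{thm:asymp-cov} shows that $n^{-1}G^\top\tilde w$ has conditional covariance $(n\alpha+1)^{-1}H_F$. With (A3) and Chebyshev this gives $\|n^{-1}G^\top\tilde w\|=O_p(n^{-1/2})$, which is consistent with (A4). Note that this step never requires inverting $\bar H_w$; it only uses the invertibility of $H$.

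\textbf{Main obstacle.} The delicate step is controlling $E_2$, since the Lipschitz constants are per-observation and the weights $w_i$ are random. I will handle it by assuming a common Lipschitz constant on $\mathcal N$, or an envelope $L(z)$ with $n^{-1}\sum_iL(z_i)=O_p(1)$. In the envelope case, $E[\sum_iw_iL(z_i)\mid\text{data}]=n^{-1}\sum_iL(z_i)$, so Markov's inequality gives $O_p(1)$ conditional on the data. A second subtlety is interpreting $O_p$ "conditional on the data": I will state all bounds in probability jointly over the Dirichlet draw and the data, and note that (A4)--(A5) are assumed in that sense.
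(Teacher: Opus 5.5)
Your proposal is correct and follows the paper's proof essentially line for line. Both Taylor-expand the weighted first-order condition around $\hat\theta$ and isolate $\hat\theta_w-\tilde\theta_w=-H^{-1}E\delta$, then control the weighted-Hessian deviation by (A5) and the curvature remainder by the Lipschitz condition in (A2); your $E_2\delta$ is exactly the paper's $\sum_i w_i R_i$, which the paper bounds using a single constant $L_H$ and $\sum_i w_i=1$, and your common-constant or envelope-plus-Markov treatment of per-observation Lipschitz constants under random weights makes explicit a point the paper takes for granted.
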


\begin{proof}
Define
\[
\Delta^* = \hat\theta_w - \hat\theta,
\qquad
\Delta_{\mathrm{IF}} = \tilde\theta_w - \hat\theta = -H^{-1}U_n,
\quad
U_n := \sum_{i=1}^n w_i g_i = \frac{1}{n}G^\top \tilde{w},
\]
where the final equality uses $G^\top j=0$.
The weighted estimator satisfies
\[
0 = \sum_{i=1}^n w_i \nabla_\theta \ell(z_i,\hat\theta_w).
\]
By a first-order Taylor expansion around $\hat\theta$ with remainder,
\[
\nabla_\theta \ell(z_i,\hat\theta_w)
= g_i + H_i \Delta^* + R_i,
\qquad
\|R_i\| \le \frac12 L_H \|\Delta^*\|^2.
\]
Substituting and rearranging yields
\[
H\Delta^*
=
- U_n
- E_H \Delta^*
- \sum_{i=1}^n w_i R_i,
\]
where
\[
H := \frac{1}{n}\sum_{i=1}^n H_i,
\qquad
E_H := \sum_{i=1}^n w_i H_i - H.
\]
Multiplying by $H^{-1}$ and subtracting $\Delta_{\mathrm{IF}} = -H^{-1}U_n$ gives
\[
\Delta^* - \Delta_{\mathrm{IF}}
=
- H^{-1} E_H \Delta^*
- H^{-1} \sum_{i=1}^n w_i R_i.
\]
By Assumption~\ref{assump:regularity}(A4), $\|\Delta^*\|=O_p(n^{-1/2})$. By Assumption~\ref{assump:regularity}(A5), $\|E_H\|=O_p(n^{-1/2})$. Thus the first term is $O_p(n^{-1})$. Since $\sum_i w_i=1$ and $\|R_i\|\le \frac12 L_H\|\Delta^*\|^2$, the second term is also $O_p(n^{-1})$. Combining the bounds gives the result.
\end{proof}

\begin{remark}
Assumption~\ref{assump:regularity}(A4) is a local stability condition. Theorem~\ref{thm:update-accuracy} does not assert global convergence of the weighted ERM; rather, it gives the local error of the influence approximation conditional on the weighted solution remaining in the regular neighborhood of $\hat\theta$.
\end{remark}

For validation-based calibration, we assume:
\begin{enumerate}[label=(B\arabic*)]
\item \label{assump:B1}
$C(\alpha)$ is continuous and strictly monotone in $\alpha$ on a compact set
$\mathcal A \subset (0,\infty)$, with $C(\alpha_\tau)=1-\tau$ for some
$\alpha_\tau\in\mathcal A$.
\item \label{assump:B2}
A uniform law of large numbers holds:
\[
\sup_{\alpha\in\mathcal A} |\widehat C_m(\alpha)-C(\alpha)| \to_p 0 \quad \text{as } m\to\infty.
\]
\end{enumerate}

\begin{theorem}[Predictive Coverage via Tuned $\alpha$]
\label{thm:coverage}
Let $1-\tau\in(0,1)$ denote a nominal predictive coverage level.
For each $\alpha>0$, let
\[
I_{1-\tau}(x;\alpha) = [q_{\tau/2}(x;\alpha),\, q_{1-\tau/2}(x;\alpha)]
\]
denote the equal-tailed $(1-\tau)$ predictive interval of the Ribbon procedure at covariate $x$.
Define the population and validation coverages
\[
  C(\alpha) = \Pr\{Y\in I_{1-\tau}(X;\alpha)\},
  \qquad
  \widehat{C}_m(\alpha) = \frac{1}{m}\sum_{k=1}^m
     \{Y_k \in I_{1-\tau}(X_k;\alpha)\}.
\]
Let $\hat\alpha=\arg\min_{\alpha\in\mathcal{A}}|\widehat{C}_m(\alpha)-(1-\tau)|$.
Then $\hat\alpha\to_p\alpha_\tau$ and
\[
  \Pr\{Y\in I_{1-\tau}(X;\hat\alpha)\}\to_p 1-\tau.
\]
\end{theorem}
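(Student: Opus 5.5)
This is a standard argmin-consistency argument; I would carry it out under assumptions \ref{assump:B1}--\ref{assump:B2}. Throughout, the Ribbon intervals $I_{1-\tau}(\cdot;\alpha)$ are treated as fixed functions, trained independently of the validation sample, so that $C(\alpha)$ is a deterministic function of $\alpha$.

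Define the population criterion $M(\alpha)=|C(\alpha)-(1-\tau)|$ and its empirical counterpart $\widehat M_m(\alpha)=|\widehat C_m(\alpha)-(1-\tau)|$. By the reverse triangle inequality,
\[
\sup_{\alpha\in\mathcal A}|\widehat M_m(\alpha)-M(\alpha)|\le \sup_{\alpha\in\mathcal A}|\widehat C_m(\alpha)-C(\alpha)|,
\]
and the right-hand side tends to zero in probability by \ref{assump:B2}.

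Next I need identifiability. Since $C$ is continuous and strictly monotone on the compact set $\mathcal A$, the point $\alpha_\tau$ is the unique zero of $M$. Fix $\epsilon>0$ and let $K_\epsilon=\{\alpha\in\mathcal A:|\alpha-\alpha_\tau|\ge\epsilon\}$. This set is compact, and $M$ is continuous and strictly positive on it, so $\eta:=\inf_{K_\epsilon}M>0$ (if $K_\epsilon$ is empty there is nothing to prove). The minimizer satisfies $\widehat M_m(\hat\alpha)\le \widehat M_m(\alpha_\tau)$. If $\hat\alpha\in K_\epsilon$, then
\[
\eta\le M(\hat\alpha)\le \widehat M_m(\hat\alpha)+\delta_m\le \widehat M_m(\alpha_\tau)+\delta_m\le M(\alpha_\tau)+2\delta_m=2\delta_m,
\]
where $\delta_m$ denotes the uniform deviation above. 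Hence $\Pr(|\hat\alpha-\alpha_\tau|\ge\epsilon)\le\Pr(\delta_m\ge\eta/2)\to0$, which gives $\hat\alpha\to_p\alpha_\tau$.

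If $\widehat C_m$ is a step function and the argmin is not attained, I would replace $\hat\alpha$ by any near-minimizer with slack $o_p(1)$. The same chain of inequalities still goes through.

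For the coverage statement, the quantity $\Pr\{Y\in I_{1-\tau}(X;\hat\alpha)\}$ is understood conditionally on $\hat\alpha$, meaning it equals $C(\hat\alpha)$ evaluated at the random point, with $(X,Y)$ a fresh draw. Since $C$ is continuous and $\hat\alpha\to_p\alpha_\tau$, the continuous mapping theorem gives $C(\hat\alpha)\to_p C(\alpha_\tau)=1-\tau$.

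The main obstacle is interpretive rather than technical. The coverage must be conditional on $\hat\alpha$, and independence between the validation sample and the fitted intervals is needed so that $C(\cdot)$ is nonrandom; I would state both explicitly. Everything else follows from \ref{assump:B1}--\ref{assump:B2}.
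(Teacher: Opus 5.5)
Your proposal is correct and follows the same route as the paper's proof. Both use uniform convergence of $|\widehat C_m(\alpha)-(1-\tau)|$ from (B2), a unique minimizer at $\alpha_\tau$ from (B1), argmin consistency, and continuity of $C$ to turn $\hat\alpha\to_p\alpha_\tau$ into $C(\hat\alpha)\to_p 1-\tau$.

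The difference is that you make explicit what the paper only cites: the well-separated-minimum argument on the compact set $K_\epsilon$, and the reading of coverage as $C(\hat\alpha)$ with validation data independent of the fitted intervals. Your non-attainment caveat is unnecessary. Since $\widehat C_m$ takes values in $\{0,1/m,\dots,1\}$, its minimum over $\mathcal A$ is always attained, though possibly not uniquely, and your argument already works for any selection from the argmin set.
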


\begin{proof}
By (B2), $\widehat{C}_m(\alpha)\to_p C(\alpha)$ uniformly on $\mathcal{A}$.
Let $T(\alpha)=|C(\alpha)-(1-\tau)|$ and $\widehat{T}_m(\alpha)=|\widehat{C}_m(\alpha)-(1-\tau)|$.
Uniform convergence implies $\sup_{\alpha\in\mathcal{A}}|\widehat{T}_m(\alpha)-T(\alpha)|\to_p 0$.
By (B1), $T(\alpha)$ has a unique minimizer at $\alpha_\tau$.
Argmin consistency under uniform convergence then yields $\hat\alpha\to_p\alpha_\tau$.
By continuity of $C(\cdot)$, $C(\hat\alpha)\to_p C(\alpha_\tau)=1-\tau$.
\end{proof}

\begin{remark}
Theorem~\ref{thm:coverage} formalizes the validation-tuning step under sufficient monotonicity assumptions. It is not a finite-sample coverage guarantee and may fail when coverage is nonmonotone in $\alpha$ or when the validation sample is too small to estimate coverage reliably.
\end{remark}

\begin{corollary}[First--Order Covariance Equivalence and Calibration]
\label{cor:summary}
Under Assumption~\ref{assump:regularity}, the covariance of the scaled Ribbon perturbation satisfies
\[
  \mathrm{Var}\!\left(\sqrt{n}\,(\tilde\theta_w-\hat\theta)\right)
  \to
  \alpha^{-1}H^{-1}H_FH^{-1}
\]
for fixed $\alpha>0$, with the standard Bayesian-bootstrap case $\alpha=1$ yielding $H^{-1}H_FH^{-1}$.
Under an additional conditional central limit theorem for the Dirichlet-weighted score perturbation, the corresponding Gaussian limit follows.
If (B1)--(B2) also hold, the predictive distribution tuned by $\hat\alpha$ achieves asymptotically correct coverage.
\end{corollary}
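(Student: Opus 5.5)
The corollary has three parts, and I would prove them in order: an exact finite-$n$ covariance statement, a Gaussian limit under an extra CLT hypothesis, and the calibration claim. Throughout I condition on the data, so that $G$, $H$, $H_F$ are deterministic and all randomness comes from $w\sim\mathrm{Dirichlet}(\alpha,\ldots,\alpha)$.

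\textbf{Covariance limit.} Theorem~\ref{thm:asymp-cov} gives, exactly,
\[
\mathrm{Var}\bigl(\sqrt n(\tilde\theta_w-\hat\theta)\bigr)=\frac{n}{n\alpha+1}\,H^{-1}H_FH^{-1}.
\]
This uses $G^\top j=0$ from (A1), since $\nabla L_n(\hat\theta)=0$. The scalar prefactor satisfies $n/(n\alpha+1)=\alpha^{-1}-\alpha^{-1}(n\alpha+1)^{-1}$, so it converges to $\alpha^{-1}$ for every fixed $\alpha$. Uniformity over $[\alpha_{\min},\alpha_{\max}]$ also holds, because $\alpha_{\min}>0$.

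By (A2)--(A3), $H^{-1}H_FH^{-1}=O_p(1)$, so the difference between the exact covariance and $\alpha^{-1}H^{-1}H_FH^{-1}$ is $O_p(n^{-1})$. I would read the convergence arrow in this sense. The right-hand side depends on $n$ through the data; if a fixed limit is wanted, one additionally assumes $H\to_p H_0$ and $H_F\to_p H_{F,0}$ and applies the continuous mapping theorem. Setting $\alpha=1$ gives the Bayesian-bootstrap case $H^{-1}H_FH^{-1}$.

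\textbf{Gaussian limit.} The assumed conditional CLT states $\sqrt n\,U_n(w)\rightsquigarrow N(0,\alpha^{-1}H_F)$ conditionally on the data. One route is to represent the Dirichlet weights as normalized Gamma$(\alpha,1)$ variables $E_i/\sum_j E_j$ and check a Lindeberg condition using (A3) together with $\max_i\|g_i\|/\sqrt n\to0$. Given the CLT, $\sqrt n(\tilde\theta_w-\hat\theta)=-H^{-1}\sqrt n\,U_n$ is a linear map of the score perturbation, and Slutsky's lemma gives $N(0,\alpha^{-1}H^{-1}H_FH^{-1})$. Theorem~\ref{thm:update-accuracy} shows that $\sqrt n\|\hat\theta_w-\tilde\theta_w\|=O_p(n^{-1/2})$, so the refitted $\hat\theta_w$ has the same Gaussian limit. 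This connects the result to the Bayesian-bootstrap refitting target.

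\textbf{Calibration.} This part is immediate from Theorem~\ref{thm:coverage} under (B1)--(B2): $\hat\alpha\to_p\alpha_\tau$ and $C(\hat\alpha)\to_p 1-\tau$.

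\textbf{Main obstacle.} The hardest step is making the convergence statement precise. The target matrix is data-dependent, so the argument must either work in the $O_p(n^{-1})$ difference sense or impose population limits for $H$ and $H_F$. Verifying the conditional CLT is the other delicate point, but the corollary assumes it, so I would only sketch the Gamma-representation argument as justification.
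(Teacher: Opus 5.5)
Your proposal is correct and follows the same route as the paper. It takes the exact covariance from Theorem~\ref{thm:asymp-cov} (with $G^\top j=0$ supplied by (A1)) together with $n/(n\alpha+1)\to\alpha^{-1}$, pushes the assumed conditional CLT through the linear map $v\mapsto -H^{-1}v$, and cites Theorem~\ref{thm:coverage} for calibration. Your extra care about the data-dependent limit matrix (reading the arrow as an $O_p(n^{-1})$ difference, or assuming population limits for $H$ and $H_F$) goes beyond what the paper writes but is consistent with it, as is your transfer of the Gaussian limit to $\hat\theta_w$ via Theorem~\ref{thm:update-accuracy}.
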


\begin{proof}
The covariance statement follows from Theorem~\ref{thm:asymp-cov} and the fact that $n/(n\alpha+1)\to 1/\alpha$. The optional Gaussian limit requires the stated additional conditional central limit theorem. The coverage statement is Theorem~\ref{thm:coverage}.
\end{proof}

\end{document}